\begin{document}

\title{ChronoSelect: Robust Learning with Noisy Labels via Dynamics Temporal Memory}

\author{Jianchao Wang}
\affiliation{
  \institution{Shenzhen Institute for Advanced Study, University of Electronic Science and Technology of China}
  \city{Shenzhen}
  \country{China}
}
\email{202422280605@std.uestc.edu.cn}

\author{Qingfeng Li}
\affiliation{
    \institution{School of Computer Science and Engineering, University of Electronic Science and Technology of China}
    \city{Chengdu}
    \country{China}
}
\email{liqingfeng0316@gmail.com}

\author{Pengcheng Zheng}
\affiliation{
    \institution{School of Computer Science and Engineering, University of Electronic Science and Technology of China}
    \city{Chengdu}
    \country{China}
}
\email{zpc777@std.uestc.edu.cn}

\author{Xiaorong Pu}
\authornotemark[1]
\affiliation{
    \institution{School of Computer Science and Engineering, University of Electronic Science and Technology of China}
    \city{Chengdu}
    \country{China}
}
\affiliation{
    \institution{Shenzhen Institute for Advanced Study, University of Electronic Science and Technology of China}
    \city{Shenzhen}
    \country{China}
}
\email{puxiaor@uestc.edu.cn}

\author{Yazhou Ren}
\affiliation{
    \institution{School of Computer Science and Engineering, University of Electronic Science and Technology of China}
    \city{Chengdu}
    \country{China}
}
\affiliation{
    \institution{Shenzhen Institute for Advanced Study, University of Electronic Science and Technology of China}
    \city{Shenzhen}
    \country{China}
}
\email{yazhou.ren@uestc.edu.cn}


\renewcommand{\shortauthors}{Jianchao Wang, Qingfeng Li, Pengcheng Zheng, Xiaorong Pu*, Yazhou Ren}

\begin{abstract}
  Training deep neural networks on real-world datasets is often hampered by the presence of noisy labels, which can be memorized by over-parameterized models, leading to significant degradation in generalization performance. While existing methods for learning with noisy labels (LNL) have made considerable progress, they fundamentally suffer from static snapshot evaluations and fail to leverage the rich temporal dynamics of learning evolution. In this paper, we propose ChronoSelect (chrono denoting its temporal nature), a novel framework featuring an innovative four-stage memory architecture that compresses prediction history into compact temporal distributions. Our unique sliding update mechanism with controlled decay maintains only four dynamic memory units per sample, progressively emphasizing recent patterns while retaining essential historical knowledge. This enables precise three-way sample partitioning into clean, boundary, and noisy subsets through temporal trajectory analysis and dual-branch consistency. Theoretical guarantees prove the mechanism's convergence and stability under noisy conditions. Extensive experiments demonstrate ChronoSelect's state-of-the-art performance across synthetic and real-world benchmarks.
\end{abstract}

\begin{CCSXML}
<ccs2012>
   <concept>
       <concept_id>10010147.10010257.10010258.10010259</concept_id>
       <concept_desc>Computing methodologies~Supervised learning</concept_desc>
       <concept_significance>500</concept_significance>
       </concept>
 </ccs2012>
\end{CCSXML}

\ccsdesc[500]{Computing methodologies~Supervised learning}

\keywords{Semi-Supervised Learning; Noisy Label; Sample Selection; Robust Learning}

\maketitle

\section{Introduction}
Deep Neural Networks (DNNs) have become the cornerstone of modern artificial intelligence, achieving remarkable success across a wide range of applications. This success is largely driven by their ability to learn complex representations from large-scale datasets \cite{ref1, ref2}. However, the performance of these models is critically dependent on the quality of the training labels. In many real-world scenarios, collecting perfectly labeled data is prohibitively expensive or impractical. Consequently, datasets are often corrupted by noisy labels sourced from web scraping, crowd-sourcing, or other imperfect annotation methods \cite{ref3, ref4, ref5}. This presents a significant challenge, as the high capacity of over-parameterized DNNs allows them to easily memorize these incorrect labels \cite{ref6, ref7}, which severely degrades their generalization performance on unseen data \cite{ref8, ref9, ref10}. This phenomenon, known as the memorization effect, reveals a crucial learning dynamic: while networks can fit random noise, they tend to first learn simple, generalizable patterns from clean data before fitting noisy examples in later training stages \cite{ref7, ref11}. Capitalizing on this discovery, numerous recent methods \cite{ref31,ref43,ref44,ref45,ref29,ref52,ref53} have leveraged the memorization effect to develop robust learning techniques for noisy labels.

\textit{Sample selection} \cite{ref17,ref21,ref42,ref43} methods combat noisy labels by partitioning datasets into clean and noisy subsets, with performance critically dependent on selection criteria. Early \textit{small-loss strategies} \cite{ref17,ref10,ref21,ref29} are effective but limited: they require predefined thresholds or noise ratio knowledge, struggle to distinguish boundary samples, and fail to consider sample uncertainty. Although dynamic thresholds \cite{ref31} address some limitations, they introduce additional complexity. \textit{Fluctuation-based strategies} \cite{ref42,ref43,ref44,ref45} overcome these issues by analyzing historical predictions, identifying noisy samples through prediction volatility. These methods avoid threshold tuning and better detect boundary samples, but increase computational complexity compared to small-loss approaches. Both strategies share a critical limitation: they rely solely on current-round loss or predictions, ignoring temporal instability that accumulates training errors.

To resolve these persistent challenges, we introduce ChronoSelect, featuring a paradigm-shifting approach to temporal modeling. Our key insight stems from observing that learning evolves through distinct phases. Storing every prediction is computationally prohibitive and obscures developmental patterns under noise. Instead, we compress the entire prediction history into four strategic temporal stages that collectively capture the complete learning lifecycle. Our novel sliding update mechanism with controlled decay continuously refreshes these stages, progressively diminishing the influence of distant epochs while amplifying recent learning signals. This biologically-inspired forgetting mechanism enables adaptive tracking of sample characteristics while maintaining a stable knowledge foundation. Crucially, we prove this architecture converges to stable representations despite label noise, with perturbation effects decaying at $O(1/t)$, where $t$ refers to training epoch. By integrating these temporal patterns with dual-branch consistency metrics, ChronoSelect achieves unprecedented three-way partitioning accuracy, cleanly separating stable clean samples, evolving boundary cases, and fundamentally mislabeled examples.

Our core contributions are as follows:
\begin{itemize}
    \item We devise a novel four-stage temporal memory system that compresses prediction history into evolving distributions, updated via a unique sliding mechanism with controlled decay.
    \item We propose a temporal trajectory analysis enabling accurate three-way separation into clean, boundary, and noisy subsets without threshold tuning.
    \item We formally prove the memory convergence and stability under noisy conditions, with perturbation bounds established. 
    \item Superior results across multiple benchmarks demonstrate the framework's effectiveness and generalizability.
\end{itemize}

\section{Related Work}

\subsection{Memorization of DNNs}
A foundational challenge in training deep models on real-world data is their immense capacity. Early work \cite{ref6} robustly demonstrated that standard DNNs are capable of memorizing an entire dataset, even when the labels are assigned randomly. This finding indicated that traditional generalization theory was insufficient to explain the behavior of deep networks. A subsequent, seminal study by Arpit et al. \cite{ref7} provided a more nuanced picture, revealing the \enquote{memorization effect}. They empirically showed that DNNs do not memorize indiscriminately; instead, they prioritize learning simple, generalizable patterns from clean data during the initial training phases. Only as training progresses do the networks begin to fit the more complex and often contradictory patterns associated with noisy labels. This dynamic, where clean examples are learned first, has been further corroborated by studies showing that early learning can act as a form of regularization against noisy label memorization \cite {ref11}. The key insight from this line of research is the small-loss hypothesis: samples that consistently maintain a small training loss are highly likely to be correctly labeled \cite{ref7}. This principle has become a cornerstone for a vast number of LNL algorithms \cite{ref17, ref21}, providing a powerful, data-driven signal to differentiate clean from noisy samples.

\subsection{Handling Noisy Labels via Sample Selection}
Sample selection \cite{ref17, ref23, ref42, ref43, ref51} has emerged as a dominant strategy for learning with noisy labels. Early approaches like Co-teaching \cite{ref17} and Co-teaching+ \cite{ref18} established the small-loss hypothesis, identifying clean samples through low-loss values. These methods employ statistical models (Beta/Gaussian Mixture Models \cite{ref49, ref50, ref56}) to identify noisy samples through prediction volatility. While intuitive, these methods suffer from two fundamental limitations: they rely on crude binary partitioning that misclassifies ambiguous boundary samples near decision surfaces, and require careful threshold tuning that necessitates prior knowledge of noise ratios. Fluctuation-based techniques evolved to address these issues by incorporating historical prediction patterns, typically analyzing sequences of losses over multiple epochs. Self-Filtering \cite{ref42} differentiates noisy samples from boundary samples by detecting historical correct-to-misclassification transitions during training. Late Stopping \cite{ref43} removes samples with high FkL values as mislabeled examples. However, due to prohibitive storage requirements, most implementations only consider a limited window of $k$ recent epochs rather than full historical records.

ChronoSelect circumvents these constraints through a novel temporal compression approach. Our method captures complete historical learning patterns while maintaining only four compact memory units per sample, updated via a novel sliding mechanism. This design achieves comprehensive temporal analysis with minimal storage requirements. By preserving essential long-term dynamics often lost in window-based approaches, we enable precise three-way sample partitioning without thresholds or noise ratio estimation, while remaining practical for large-scale real-world datasets.

\begin{figure*}[htbp]
    \centering
	\begin{subfigure}{0.65\textwidth}
		\centering
		\includegraphics[width=\linewidth]{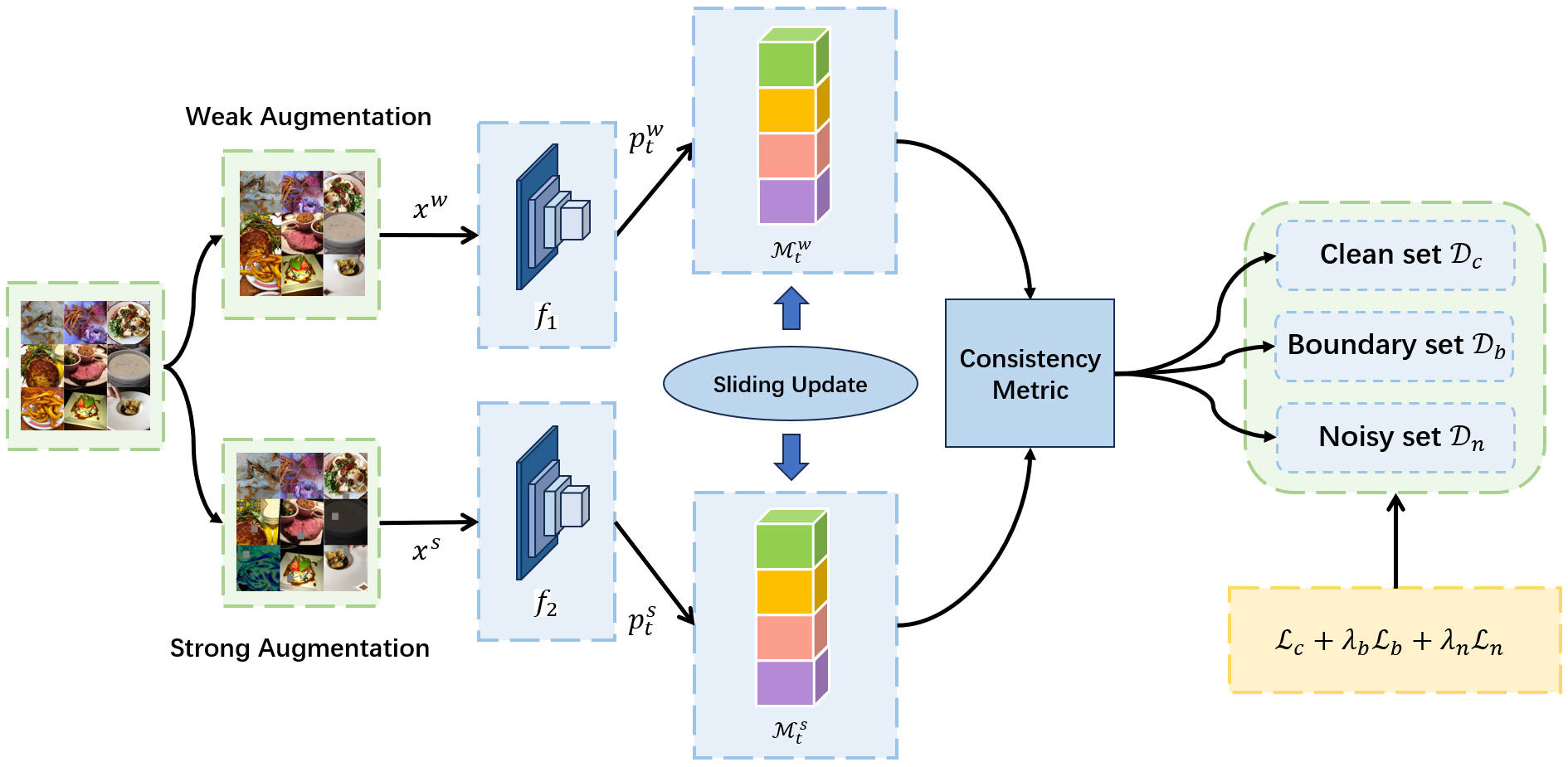}
        \caption{}
		\label{framework}
	\end{subfigure}
	\centering
	\begin{subfigure}{0.34\textwidth}
		\centering
		\includegraphics[width=0.8\linewidth]{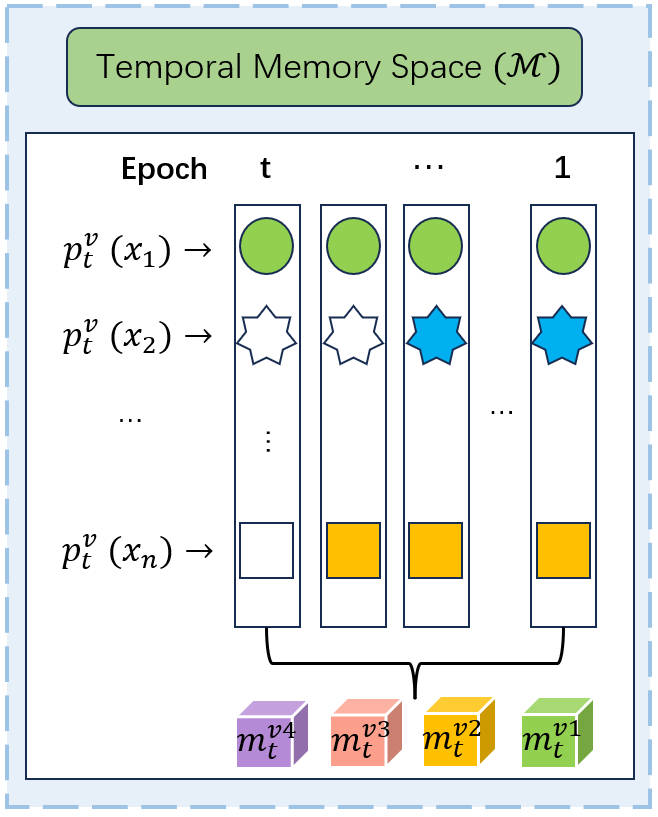}
        \caption{}
		\label{TMS}
	\end{subfigure}
	\centering
	\caption{(a) The framework of ChronoSelect. At each epoch, the ChronoSelect first employs a two-view-dual network for image classification, in which two views are obtained via two diverse augmentations. Then a dynamic sliding update strategy is designed to memorize the distribution of previous all epochs, which is used to divide the noisy data into tree subsets, i.e., clean set $\mathcal{D}_c$, boundary set $\mathcal{D}_b$ and noisy set $\mathcal{D}_n$ with consistency metric by dual network. Finally, we adopt different loss function to update the dual network. (b) The structure of Temporal Memory Space (TMS), which stores the historical prediction of each sample via sliding update.}
	\label{figure1}
\end{figure*}

\section{Proposed Method}
\subsection{Preliminaries}
Consider a classification task with input space $\mathcal{X} \subseteq \mathbb{R}^d$ and label space $\mathcal{Y} = \{1, \dots, K\}$. Given a label-corrupted training set $\widetilde{\mathcal{D}} = \{(x_i, \tilde{y}_i)\}_{i=1}^N$ where $x_i \in \mathcal{X}$ and $\tilde{y}_i \in \mathcal{Y}$ denote the input of the model and its corresponding noisy label, repectively, we aim to learn a classifier $f_\theta: \mathcal{X} \to \mathbb{P}(\mathcal{Y})$ parameterized by $\theta$. The output distribution at epoch $t$ is $p_t(x_i) = f(x_i; \theta^{(t)})$. The objective is to find optimal parameters $\theta^*$ that minimize the expected risk on clean data $\mathcal{D}$:
\begin{equation}
\theta^* = argmin_\theta \mathbb{E}_{(x,y)\sim\mathcal{D}}[\ell(f(x; \theta), y)]
\label{eq1}
\end{equation}
where $\ell(\cdot)$ is a loss function (e.g., cross-entropy).

To handle label noise, the \textit{sample selection} framework dynamically partitions $\widetilde{\mathcal{D}}$ at each epoch $t$:
\begin{equation}
\widetilde{\mathcal{D}} = \mathcal{S}^{(t)} \cup \mathcal{N}^{(t)}, \quad \mathcal{S}^{(t)} \cap \mathcal{N}^{(t)} = \emptyset
\label{eq2}
\end{equation}
The clean subset $\mathcal{S}^{(t)}$ is constructed via a selection function $\mathcal{G}$:
\begin{equation}
\mathcal{S}^{(t)} = \{ (x_i, \tilde{y}_i) \in \widetilde{\mathcal{D}} \mid \mathcal{G}(x_i, \tilde{y}_i; \theta^{(t)}) = 1 \}
\label{eq3}
\end{equation}
where $\mathcal{S}^{(t)}$, $\mathcal{N}^{(t)}$ represent clean subset and noise subset, repectively.
The model is then updated using only $\mathcal{S}^{(t)}$:
\begin{equation}
\theta^{(t+1)} = \theta^{(t)} - \eta \nabla_\theta \sum_{(x_i,\tilde{y}_i)\in\mathcal{S}^{(t)}} \ell(p_t(x_i), \tilde{y}_i)
\label{eq4}
\end{equation}
where $\eta$ denote the learning rate.

\subsection{Temporal Memory Space Construction and Sliding Update}
The key step in our selection strategy is to go through the total historical prediction stored in the dynamically updated memory space module. As shown in Figure \textcolor{red}{\ref{TMS}}, we collect all predictions of the training set $\mathcal{D}$ for all the previous epochs and store them in the Temporal Memory Space (TMS) $\mathcal{M}$ by the way that we've elaborately designed, which efficiently captures the evolutionary learning patterns of each sample. Unlike methods storing raw historical predictions, we maintain a compact yet expressive representation through four specialized memory units per sample per branch:
\begin{itemize}
    \item \textbf{Long-term Memory ($m^{v1}$):} Encodes foundational learning patterns from initial epochs.
    \item \textbf{Mid-term Memory ($m^{v2}$):} Captures accelerated learning during model refinement.
    \item \textbf{Short-term Memory ($m^{v3}$):} Reflects stabilization trends in later training phases.
    \item \textbf{Immediate Memory ($m^{v4}$):} Focuses on recent prediction refinements.
\end{itemize}

The memory space undergoes a sophisticated sliding update process that progressively transfers information between temporal stages while implementing controlled decay of older information. This biologically-inspired mechanism operates as a "computational forgetting" system, where each new epoch triggers a cascade of updates through the memory hierarchy.

Mathematically, this elegant transfer process is governed by:
\begin{equation}
    \begin{aligned}
        &m^{v1}_t = \frac{t}{t+1}m^{v1}_{t-1} + \frac{1}{t+1}m^{v2}_{t-1} && \text{(Foundation)} \\
        &m^{v2}_t = \frac{(t-1)}{t+1}m^{v2}_{t-1} + \frac{2}{t+1}m^{v3}_{t-1} && \text{(Transition)} \\
        &m^{v3}_t = \frac{(t-2)}{t+1}m^{v3}_{t-1} + \frac{3}{t+1}m^{v4}_{t-1} && \text{(Stabilization)} \\
        &m^{v4}_t = \frac{(t-3)}{t+1}m^{v4}_{t-1} + \frac{4}{t+1}p^v_t && \text{(Recent focus)}
    \end{aligned}
    \label{eq5}
\end{equation}

During initialization ($1 \leq t \leq 4$), we directly populate the memory units:
\begin{equation}
    m^{vj}_t(x_n) = p^v_t(x_n)
    \label{eq6}
\end{equation}
where $p^v_t(x_n)$ is the predicted probability distribution from the $v \in \{w(weak), s(strong)\}$ view branch at epoch $t$. This establishes baseline patterns for subsequent updates.

\subsection{Convergence and Stability Analysis}
We establish formal guarantees for our memory system's behavior under noisy conditions.

\begin{theorem}[Memory Convergence]
As training progresses ($t \to \infty$), each memory unit converges to the steady-state prediction $p^*$ when model predictions stabilize:
    \begin{equation}
        \lim_{t\rightarrow\infty} \mathbb{E}[\|m^{vj}_t - p^*\|] = 0 \quad \forall v,j
        \label{eq7}
    \end{equation}
\end{theorem}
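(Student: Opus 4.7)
The plan is to prove the theorem by a backward induction on the memory level $j$, starting from $m^{v4}$ (which directly absorbs the current prediction $p^v_t$) and propagating convergence outward to $m^{v3}$, $m^{v2}$, and finally $m^{v1}$. The hypothesis that predictions stabilize gives $\delta_t := \mathbb{E}\|p^v_t - p^*\| \to 0$. The crucial structural observation is that in every line of Eq.~(5) the two coefficients sum to $1$, so $p^*$ is a fixed point of each update; subtracting $p^*$ from both sides therefore yields a clean recursion on the error variables $e^{vj}_t := m^{vj}_t - p^*$, after which the problem reduces to controlling a scalar sequence.

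For the base case $j=4$, the triangle inequality and monotonicity of expectation give, for $a^{v4}_t := \mathbb{E}\|e^{v4}_t\|$, the scalar bound
\begin{equation*}
a^{v4}_t \;\le\; \tfrac{t-3}{t+1}\,a^{v4}_{t-1} + \tfrac{4}{t+1}\,\delta_t .
\end{equation*}
The key algebraic manoeuvre is to multiply through by the descending factorial $(t+1)t(t-1)(t-2)$; the coefficient of $a^{v4}_{t-1}$ then becomes $(t-3)t(t-1)(t-2) = t(t-1)(t-2)(t-3)$, which is precisely the same polynomial evaluated at index $t-1$. The recursion therefore telescopes to
\begin{equation*}
(t+1)t(t-1)(t-2)\,a^{v4}_t \;\le\; C_0 + 4\sum_{s=t_0+1}^{t} s(s-1)(s-2)\,\delta_s ,
\end{equation*}
with $C_0$ determined by the initialization Eq.~(6). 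Dividing by $(t+1)t(t-1)(t-2) \sim t^4$, the constant term vanishes; since $\delta_s \to 0$ while the weights $s(s-1)(s-2)$ sum to $\Theta(t^4)$, the Stolz--Ces\`aro lemma forces the remaining ratio to $0$, giving $a^{v4}_t \to 0$.

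For the inductive step, assume $a^{v,j+1}_t \to 0$. The recursion for $m^{vj}$ has the form $\frac{t-(4-j)}{t+1}m^{vj}_{t-1} + \frac{5-j}{t+1}m^{v,j+1}_{t-1}$, so multiplying by the descending factorial of degree $5-j$ (namely $(t+1)t\cdots(t+2-(5-j))$) reproduces the exact same telescoping pattern as in the base case, with $\delta_s$ replaced by $a^{v,j+1}_{s-1}$. The identical Stolz--Ces\`aro argument then drives $a^{vj}_t$ to zero, completing the induction in four steps and yielding the claim for all $j \in \{1,2,3,4\}$.

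The main obstacle is verifying that the ``integrating factor'' truly matches between consecutive $t$ at every level — this is where the peculiar choice of coefficients $\frac{t-k}{t+1}, \frac{k+1}{t+1}$ in Eq.~(5) pays off — and correctly handling the early regime $t \le 4$, where the nominal coefficients $t-3, t-2$, etc.\ can be nonpositive. This is resolved by observing that Eq.~(6) directly populates $m^{vj}_t = p^v_t$ for $t \le 4$, so the analysis only needs to start from $t_0 = 5$, at which point all coefficients are strictly positive and the telescoping machinery applies cleanly. Once this bookkeeping is in place, no additional subtle estimate is required beyond the standard Stolz--Ces\`aro lemma.
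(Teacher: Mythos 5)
Your proposal is correct, and it is in fact substantially more rigorous than the paper's own proof. The paper analyzes only the $m^{v4}$ recursion, derives the same scalar bound $\|\delta^{v4}_t\| \leq \frac{t-3}{t+1}\|\delta^{v4}_{t-1}\| + \frac{4\epsilon_t}{t+1}$, and then concludes convergence merely because ``$\frac{t-3}{t+1}\to 1$ and $\frac{4}{t+1}\to 0$''; as stated this is a non sequitur (a contraction factor tending to $1$ together with a vanishing forcing coefficient does not by itself drive the error to zero --- one needs that the product of contraction factors vanishes, here $\prod_{s=5}^{t}\frac{s-3}{s+1}=\frac{120}{(t+1)t(t-1)(t-2)}\to 0$, plus control of the accumulated forcing terms), and it dispatches the units $j\in\{1,2,3\}$ with ``analogous recursive analysis'' without observing that their forcing inputs are other memory units rather than $p^v_t$, which is exactly why an induction across levels is required. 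Your integrating-factor telescoping --- multiplying by $(t+1)t(t-1)(t-2)$ so that the coefficient of $a^{v4}_{t-1}$ becomes the same polynomial evaluated at $t-1$ --- combined with Stolz--Ces\`{a}ro supplies precisely the quantitative step the paper omits, and your backward induction from $j=4$ down to $j=1$ correctly formalizes the cascade; your treatment of the initialization regime $t\le 4$ (where nominal coefficients such as $t-3$ are nonpositive) also repairs a detail the paper silently ignores. One bookkeeping slip: in the inductive step you state the level-$j$ recursion as $\frac{t-(4-j)}{t+1}m^{vj}_{t-1}+\frac{5-j}{t+1}m^{v,j+1}_{t-1}$, which contradicts both Eq.~(\ref{eq5}) and your own base case at $j=4$; the correct form is $m^{vj}_t=\frac{t-(j-1)}{t+1}m^{vj}_{t-1}+\frac{j}{t+1}m^{v,j+1}_{t-1}$, so the descending factorial should have degree $j$, not $5-j$. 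This is harmless, since your argument is generic for any recursion $a_t \le \frac{t-k}{t+1}a_{t-1}+\frac{k+1}{t+1}c_t$ with fixed integer $k\ge 0$ and $c_t\to 0$, but the indices should be fixed for the write-up.
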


\begin{proof}
Consider the immediate memory unit ($m^{i4}$) evolution:
\begin{equation}
    m^{v4}_t = \frac{(t-3)}{t+1}m^{v4}_{t-1} + \frac{4}{t+1}p^v_t
    \label{eq8}
\end{equation}
With prediction convergence $\|p^v_t - p^*\| \leq \epsilon_t \to 0$, define error $\delta^{v4}_t = m^{v4}_t - p^*$:
\begin{equation}
    \delta^{v4}_t = \frac{t-3}{t+1}\delta^{v4}_{t-1} + \frac{4}{t+1}(p^v_t - p^*)
    \label{eq9}
\end{equation}
The error norm satisfies:
\begin{equation}
    \|\delta^{v4}_t\| \leq \frac{t-3}{t+1}\|\delta^{v4}_{t-1}\| + \frac{4\epsilon_t}{t+1}
    \label{eq10}
\end{equation}
As $\frac{t-3}{t+1} \to 1$ and $\frac{4}{t+1} \to 0$, the error vanishes asymptotically, with similar convergence for other units via recursive analysis.
\end{proof}

\begin{theorem}[Memory Stability]
Prediction perturbations $\|p^v_t - p^*\| \leq \epsilon$ induce bounded memory deviations:
    \begin{equation}
        \max_{v,j} \|m^{vj}_t - p^*\| \leq \frac{4\epsilon}{t+1} + \mathcal{O}\left(\frac{1}{t^2}\right)
        \label{eq11}
    \end{equation}
\end{theorem}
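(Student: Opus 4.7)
The plan is to establish the bound first for the immediate memory $m^{v4}$, and then propagate it upward through $m^{v3}, m^{v2}, m^{v1}$. Fix a view $v$ and set $\delta^{vj}_t := m^{vj}_t - p^*$. Reusing the recurrence derived in the proof of Memory Convergence,
\[
\delta^{v4}_t = \tfrac{t-3}{t+1}\delta^{v4}_{t-1} + \tfrac{4}{t+1}(p^v_t - p^*),
\]
the triangle inequality immediately splits the error into a \emph{fresh} perturbation term $\tfrac{4}{t+1}(p^v_t-p^*)$, of norm at most $\tfrac{4\epsilon}{t+1}$, and a \emph{propagated history} term $\tfrac{t-3}{t+1}\|\delta^{v4}_{t-1}\|$ that gathers all earlier perturbations.

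Next I would unroll the recursion from a base epoch $t_0\ge 4$ using the telescoped identity $\prod_{r=s+1}^{t}\tfrac{r-3}{r+1} = \tfrac{(s+1)s(s-1)(s-2)}{(t+1)t(t-1)(t-2)}$, so the initial-state contribution decays as $\Theta(1/t^4)$ and is safely absorbed into the $O(1/t^2)$ remainder. To handle the cumulative perturbation sum, I would invoke Theorem 1 as an inductive hypothesis: since Memory Convergence has already contracted $\|\delta^{v4}_{t-1}\|$ toward zero, multiplication by $\tfrac{t-3}{t+1}$ acts on a quantity that is itself decaying rather than on $\epsilon$ directly, yielding $O(1/t^2)$. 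Combining with the fresh term gives $\|\delta^{v4}_t\| \le \tfrac{4\epsilon}{t+1}+O(1/t^2)$. Propagation upward then uses the same decomposition almost verbatim: for $m^{v3}$ the role of $p^v_t$ is played by $m^{v4}_t$, whose deviation from $p^*$ is the bound just proved; the mixing weight has numerator $3$ instead of $4$, but the rates are identical and only the implicit constant in $O(1/t^2)$ changes. Iterating through $m^{v2}$ and $m^{v1}$ and taking the max over $v,j$ closes the argument.

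The hard part will be justifying the $O(1/t^2)$ rate on the accumulated history. A naive estimate keeping only the uniform bound $\|p^v_t-p^*\|\le\epsilon$ gives, by the product–sum decomposition, $\sum_s \tfrac{4}{s+1}\prod_r\tfrac{r-3}{r+1}\cdot\epsilon = \Theta(\epsilon)$, which is far too weak. The workaround is to refuse to decouple history from memory: Theorem 1 already guarantees $\|\delta^{v4}_{t-1}\|=o(1)$, so the propagated term feeds on a decayed quantity rather than on the raw perturbation size. A secondary subtlety is preserving the leading constant exactly $4$ in $\tfrac{4\epsilon}{t+1}$ when substituting the inductive bound: any slack in the cross-term $\tfrac{4\epsilon(t-3)}{t(t+1)}$ would inflate the coefficient, so one must verify that this cross-term collapses cleanly into the $O(1/t^2)$ bucket rather than contaminating the leading order.
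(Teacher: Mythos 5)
Your proposal contains a genuine gap, and your own intermediate computation actually exposes it. The workaround you lean on---invoking Theorem~1 so that the propagated history term ``feeds on a decayed quantity''---is not available here: Theorem~1's hypothesis is prediction \emph{convergence}, $\|p^v_t - p^*\| \le \epsilon_t \to 0$, whereas the stability theorem assumes only a uniform constant bound $\|p^v_t - p^*\| \le \epsilon$. Under the stability hypothesis the predictions need never approach $p^*$, so you cannot conclude $\|\delta^{v4}_{t-1}\| = o(1)$, and the argument becomes circular (assuming a decay to prove a decay rate). Worse, the naive estimate you dismiss as ``far too weak'' is in fact \emph{tight}: take the constant perturbation $p^v_t = p^* + \epsilon u$ for a fixed unit vector $u$. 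Since each memory unit is asymptotically a convex average of past predictions, $m^{v4}_t \to p^* + \epsilon u$, so $\|m^{v4}_t - p^*\| \to \epsilon$ and no bound of the form $\tfrac{4\epsilon}{t+1} + \mathcal{O}(1/t^2)$ can hold. The cross-term obstruction you flag at the end is therefore unfixable, not a technicality: with inductive hypothesis $\|\delta^{v4}_{t-1}\| \le \tfrac{4\epsilon}{t}$, the propagated term $\tfrac{t-3}{t+1}\cdot\tfrac{4\epsilon}{t}$ is $\Theta(\epsilon/t)$---the same order as the fresh term $\tfrac{4\epsilon}{t+1}$---so their sum is roughly $\tfrac{8\epsilon}{t}$ and the leading constant $4$ cannot survive.

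For comparison, the paper's own proof is exactly the induction you were wary of, and it closes only through an arithmetic slip: it asserts
\begin{equation*}
    \frac{k-3}{k+2}\cdot\frac{4\epsilon}{k+1} + \frac{4\epsilon}{k+2} \le \frac{4\epsilon(k-2)}{(k+2)(k+1)},
\end{equation*}
but the left side equals $\tfrac{4\epsilon(2k-2)}{(k+2)(k+1)}$, which exceeds the claimed $\tfrac{4\epsilon}{k+2}$ for all $k \ge 4$. So neither the paper's induction nor your unrolling-plus-Theorem-1 route can establish the statement as written; your analysis is the more honest of the two precisely because it surfaces the $\Theta(\epsilon)$ floor. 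A repaired statement would need either a decaying perturbation hypothesis $\epsilon_t \to 0$ (reducing it to Theorem~1) or a conclusion of the form $\limsup_t \|m^{vj}_t - p^*\| \le \epsilon$, i.e.\ perturbations are not amplified, rather than suppressed at rate $1/t$.
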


\begin{proof}
By induction on the memory depth. For the base case ($t=4$), initialization gives $\|m^{v4}_4 - p^*\| = \|p^v_4 - p^*\| \leq \epsilon$. Assume the bound holds at epoch $k$, then at $k+1$:
\begin{equation}
    \|m^{v4}_{k+1} - p^*\| \leq \frac{k-3}{k+2}\underbrace{\|m^{vi4}_k - p^*\|}_{\leq \frac{4\epsilon}{k+1}} + \frac{4}{k+2}\epsilon \leq \frac{4\epsilon(k-2)}{(k+2)(k+1)} \leq \frac{4\epsilon}{k+2}
    \label{eq12}
\end{equation}
The $\mathcal{O}(1/t^2)$ residual emerges from higher-order Taylor expansions. Similar stability bounds propagate through all memory units.
\end{proof}

\noindent\textbf{Decay Dynamics Interpretation:}
The weight coefficients exhibit meaningful temporal properties:
\begin{equation}
    \alpha_j(t) = \frac{t+1-j}{t+1} \quad \beta_j(t) = \frac{j}{t+1} \quad \text{with} \quad \lim_{t\to\infty} \alpha_j(t) = 1, \lim_{t\to\infty} \beta_j(t) = 0
    \label{eq13}
\end{equation}
This design ensures:
\begin{itemize}
    \item Early training: Rapid adaptation to new information ($\beta$ dominates).
    \item Late training: Stability through historical preservation ($\alpha$ dominates).
    \item Positional awareness: Higher indices $j$ weight recent information more heavily.
\end{itemize}

\subsection{Sample Selection Strategy with the Temporal Memory Space}
ChronoSelect transforms temporal patterns into optimized supervision through an integrated framework that extracts meaningful signatures from memory, categorizes samples based on evolutionary patterns, and applies tailored supervision strategies. This creates a self-reinforcing system that progressively enhances noise robustness throughout training.

The foundation of our approach begins with extracting two complementary temporal signatures from the memory space that capture distinct aspects of learning behavior. The convergence metric $\Gamma_t(x_n)$ examines the loss trajectory across all four temporal phases, identifying whether a sample shows consistent improvement in prediction confidence over time. This critical metric is formally defined through the indicator function that checks for monotonic loss reduction across consecutive temporal stages: 
\begin{equation}
    \Gamma_t(x_n) = \mathbb{I}\left( \bigcap_{v \in \{w, s\}} \bigcap_{j=1}^3 \left( \mathcal{L}^{vj}_t > \mathcal{L}^{v(j+1)}_t \right) \right)
    \label{eq14}
\end{equation}
where $\mathbb{I}$ denotes the sign function outputs 0 or 1, and $\mathcal{L}^{ij}_t = \text{CE}(m^{ij}_{t-1}(x_n), y_n)$ represents the cross-entropy loss at temporal stage $j$ for branch $i$. Samples demonstrating this monotonic reduction across all phases indicate stable learning patterns where the model progressively refines its understanding.

Complementing this, the consistency metric $\psi(x_n)$ quantifies the agreement between dual-branch predictions throughout the learning journey, formally expressed as:
\begin{equation}
    \psi(x_n) = \frac{1}{4}\sum_{j=1}^4 \mathbb{I}\left( \arg\max m^{wj}_{t-1}(x_n) == \arg\max m^{sj}_{t-1}(x_n) \right)
    \label{eq15}
\end{equation}
Lower consistency values reveal fundamental uncertainty in sample interpretation that often indicates proximity to decision boundaries.

These temporal signatures enable precise categorization of samples into three distinct groups that reflect their fundamental relationship to the learning process. Clean samples represent the bedrock of reliable knowledge, exhibiting both perfect convergence patterns ($\Gamma_t(x_n) = 1$) and complete prediction consistency between branches ($\psi(x_n) = 1$). These samples typically show early and stable memorization patterns, forming the essential foundation for model generalization. Boundary samples show convergence but inconsistent predictions due to decision boundary proximity, where feature variations cause interpretation differences. Noisy samples display non-convergent, fluctuating loss trajectories signaling fundamental feature-label mismatches requiring specialized handling.

The formal partitioning rules are expressed as:
\begin{equation}
    \begin{aligned}
        \mathcal{D}_c &= \{ x_n : \Gamma_t(x_n) = 1 \wedge \psi(x_n) = 1 \} \\
        \mathcal{D}_b &= \{ x_n : \Gamma_t(x_n) = 1 \wedge \psi(x_n) < 1 \} \\
        \mathcal{D}_n &= \{ x_n : \Gamma_t(x_n) = 0 \}
        \label{eq16}
    \end{aligned}
\end{equation}

\subsection{Semi-Supervised Learning via Sample Selection for Noisy Labels}
The overall architecture is illustrated in Figure \textcolor{red}{\ref{framework}}. Capitalizing on the highly non-convex nature of deep networks \cite{ref27}, we implement a dual-branch framework \cite{ref17, ref29, ref18} where both branches share identical network structures but undergo differential initialization to enable complementary learning. Building on this foundation, specialized supervision strategies are deployed to align with each sample type's inherent characteristics and learning requirements.

For clean samples in $\mathcal{D}_c$, we employ standard cross-entropy loss to reinforce correct knowledge:
\begin{equation}
    \mathcal{L}_c = -\frac{1}{|\mathcal{D}_c|} \sum_{(x_n,y_n)\in\mathcal{D}_c} \sum_{k=1}^K y_n^k \log p^k(x_n)
    \label{eq17}
\end{equation}
where $p^k(x_n)$ denotes the predicted probability for class $k$, and $|D_c|$ the clean sample count.

This approach maximizes likelihood estimation for confidently labeled samples, strengthening the model's core understanding of unambiguous patterns through direct label supervision. The dual branches receive identical supervision for these samples, creating a consensus reinforcement effect that solidifies reliable knowledge.

Boundary samples in $\mathcal{D}_b$ receive fundamentally different treatment through Generalized Cross Entropy (GCE) loss with smoothness factor $q$ which is empirically set as 0.7 \cite{ref31}:
\begin{equation}
    \mathcal{L}_b = \frac{1}{|\mathcal{D}_b|} \sum_{(x_n,y_n)\in\mathcal{D}_b} \frac{1 - (p^{y_n}(x_n))^q}{q}
    \label{eq18}
\end{equation}

These samples lie near decision boundaries where label noise severely impacts model confidence. GCE's q-exponent mechanism balances the noise robustness of MAE and the optimization efficiency of cross-entropy, enabling learning from ambiguous samples without overcommitting to noisy labels, thus refining boundaries stably.

For noisy samples in $\mathcal{D}_n$, we implement sophisticated consistency regularization that transforms dual-network disagreement into a powerful supervisory signal. The loss function applies strong augmentations to create divergent views of the same sample:
\begin{equation}
    \mathcal{L}_n = \frac{1}{|\mathcal{D}_n|} \sum_{x_n\in\mathcal{D}_n} D_{\text{sym}} \left( p^1(\text{Aug}_1(x_n)) \| p^2(\text{Aug}_2(x_n)) \right)
    \label{eq19}
\end{equation}
where $D_{\text{sym}} \left( p^1(\text{Aug}_1(x_n)) \| p^2(\text{Aug}_2(x_n)) \right)$ indicates the symmetric KL divergence.

This elegant formulation treats dual networks as complementary teachers, converting noisy samples into unlabeled data. The symmetric divergence ensures balanced learning and self-correction, progressively improving sample interpretation without relying on corrupted labels.

The composite loss integrates all components:
\begin{equation}
    \mathcal{L}_{total}=\mathcal{L}_c+\lambda_b\mathcal{L}_b+\lambda_n\mathcal{L}_n
    \label{eq20}
\end{equation}
where $\lambda_b$ and $\lambda_n$ is the hyper-parameter of $\mathcal{L}_b$ and $\mathcal{L}_n$ to balance loss which is set as 1 and 0.1, respectively. The optimization procedure is detailed in Algorithm \textcolor{red}{\ref{alg:ChronoSelect}}.

\begin{algorithm}[t]
    \caption{The ChronoSelect method}
    \label{alg:ChronoSelect}
    \begin{algorithmic}[1]
    \Require Training data $\mathcal{D}=\{(x_n, y_n)\}_{n=1}^N$, dual-branch networks $f_1, f_2$ parameterized with $\theta_1$ and $\theta_2$, respectively
    \Require Initialize memory space $\mathcal{M}^w_0, \mathcal{M}^s_0$
    \Require Randomly initialize parameters $\theta_1, \theta_2$
    \For{epoch $t=1$ \textbf{to} $T$}
        \State \textbf{Memory Update:}
        \If{$t \leq 4$}
            \For{each sample $(x_n, y_n) \in \mathcal{D}$}
                \State Initialize $m^{vt}_t(x_n)$ by Eq. (\textcolor{red}{\ref{eq6}})
                \Comment{Initialize temporal}
            \EndFor
        \Else
            \For{$j=1$ \textbf{to} $4$}
                \State Update $m^{vj}_t(x_n)$ by Eq. (\textcolor{red}{\ref{eq5}})
                \Comment{Sliding window update}
            \EndFor
        \EndIf
        
        \State \textbf{Sample Selection:}
        \For{each sample $(x_n, y_n) \in \mathcal{D}$}
            \State Compute $\Gamma_t(x_n)$ by Eq. (\textcolor{red}{\ref{eq14}})
            \State Compute $\psi(x_n)$  by Eq. (\textcolor{red}{\ref{eq15}})
            \State Add $x_n$ into $\mathcal{D}_c$ or $\mathcal{D}_b$ or $\mathcal{D}_n$ by Eq. (\textcolor{red}{\ref{eq16}})
        \EndFor
        
        \State \textbf{Loss Computation:}
        \State Calculate $\mathcal{L}_c$, $\mathcal{L}_b$, $\mathcal{L}_n$ by Eq. (\textcolor{red}{\ref{eq17},\ref{eq18},\ref{eq19}})
        \State $\mathcal{L}_{total} = \mathcal{L}_c + \lambda_b\mathcal{L}_b + \lambda_n\mathcal{L}_n$ 
        \Comment{$\lambda_b=1, \lambda_n=0.1$}
    \EndFor
    \end{algorithmic}
\end{algorithm}

\begin{table*}[t]
    \centering
    \caption{Comparison with the SOTA methods on CIFAR-10 and CIFAR-100 with IDN. }
    \label{table1}
    \begin{tabular}{l c c c | c c c c c c c c}
        \toprule
            Dataset & \multicolumn{3}{c}{CIFAR-10} & \multicolumn{3}{c}{CIFAR-100}\\
            Noise rate & Inst. 20\% & Inst. 40\% & Inst. 60\% & Inst. 20\% & Inst. 40\% & Inst. 60\% \\
        \midrule
            $JoCoR$ \cite{ref32} & $88.78 \pm 0.15$ & $71.64 \pm 3.09$ & $63.46 \pm 1.58$ & $43.36 \pm 0.13$ & $23.95 \pm 0.44$ & $13.16 \pm 0.97$\\
            $DivideMix$ \cite{ref21} & $93.33 \pm 0.14$ & $95.07 \pm 0.11$ & $85.50 \pm 0.71$ & $79.04 \pm 0.21$ & $76.08 \pm 0.35$ & $46.72 \pm 1.32$\\
            $CORSES^2$ \cite{ref33} & $91.14 \pm 0.46$ & $83.67 \pm 1.29$ & $77.68 \pm 2.24$ & $66.47 \pm 0.45$ & $58.99 \pm 1.49$ & $38.55 \pm 3.25$\\
            $CAL$ \cite{ref34} & $92.01 \pm 1.25$ & $84.96 \pm 1.25$ & $79.82 \pm 2.56$ & $69.11 \pm 0.46$ & $63.17 \pm 0.19$ & $43.58 \pm 3.30$\\
            $CC$ \cite{ref26} & $93.68 \pm 0.12$ & $94.95 \pm 0.04$ & $94.55 \pm 0.11$ & $79.01 \pm 0.13$ & $76.84 \pm 0.19$ & $59.40 \pm 0.46$\\
            $DISC$ \cite{ref31} & $\underline{96.48} \pm \underline{0.04}$ & $\underline{95.94} \pm \underline{0.04}$ & $\underline{95.05} \pm \underline{0.05}$ & $\underline{80.12} \pm \underline{0.13}$ & $\underline{78.44} \pm \underline{0.19}$ & $\boldsymbol{69.57 \pm 0.14}$\\
            $ours$ & $\boldsymbol{96.78 \pm 0.08}$ & $\boldsymbol{96.14 \pm 0.05}$ & $\boldsymbol{95.12 \pm 0.10}$ & $\boldsymbol{80.34 \pm 0.04}$ & $\boldsymbol{78.62 \pm 0.14}$ & $\underline{69.37} \pm \underline{0.25}$\\
        \bottomrule
    \end{tabular}
\end{table*}

\begin{table}
    \centering
    \footnotesize
    \caption{Comparison with the SOTA methods on CIFAR-100 with sym. and asym. }
    \label{table2}
    \begin{tabular}{l c c  c c}
        \toprule
            Method & Sym 20\% & Sym 50\% & Sym 80\% & Asym 40\%\\
        \midrule
            $JoCoR$ \cite{ref32} & $53.01 \pm 0.04$ & $43.49 \pm 0.46$ & $15.49 \pm 0.98$ & $32.70 \pm 0.35$\\
            $DivideMix$ \cite{ref21} & $76.93 \pm 0.11$ & $74.26 \pm 0.12$ & $59.65 \pm 0.07$ & $55.56 \pm 0.03$\\
            $ELR+$ \cite{ref36} & $74.21 \pm 0.22$ & $65.01 \pm 0.44$ & $30.27 \pm 0.86$ & $73.73 \pm 0.34$\\
            $Co-learning$ \cite{ref37} & $66.58 \pm 0.15$ & $54.54 \pm 0.43$ & $35.45 \pm 0.79$ & $47.62 \pm 0.79$\\
            $GJS$ \cite{ref38} & $78.05 \pm 0.25$ & $72.58 \pm 0.25$ & $44.49 \pm 0.53$ & $63.70 \pm 0.22$\\
            $ACL$ \cite{ref47} & $73.41 \pm 0.22$ & $67.38 \pm 0.17$ & $55.75 \pm 0.36$ & $70.79 \pm 1.01$\\
            $ours$ & $\boldsymbol{79.72 \pm 0.14}$ & $\boldsymbol{75.18 \pm 0.05}$ & $\boldsymbol{58.82 \pm 0.29}$ & $\boldsymbol{76.30 \pm 0.20}$\\
        \bottomrule
    \end{tabular}
\end{table}

\section{Experimental}
\subsection{Controllable Noise Benchmarks}

\indent\textbf{Datasets.} 
We evaluate ChronoSelect on CIFAR-10 and CIFAR-100 \cite{ref39} under various label noise conditions. We consider three noise types: instance-dependent noise (IDN) generated following \cite{ref41} using truncated Gaussian-distributed class-specific noise rates, symmetric noise implemented through uniform label flipping to all possible classes \cite{ref21}; and asymmetric noise created by pairwise class flipping between semantically similar categories. Following established protocols \cite{ref26, ref40}, we test IDN noise rates from 20\% to 60\% on both datasets and symmetric noise ($\rho$ $\in$ \{20\%, 50\%, 80\%\}) and asymmetric noise ($\rho$ = 40\%) on CIFAR-100.\\
\textbf{Experimental Setup.} 
ResNet-18 \cite{ref55} serves as the backbone network trained for 200 epochs with batch size of 128. We employ Adam optimizer with initial learning rate 0.001, implementing linear decay to 0 after epoch 80 where $\beta_1$ decays from 0.9 to 0.1. Consistent with \cite{ref31}, two types of augmentation are utilized, i.e., weak and strong data augmentation. All experiments repeat three times with different random seeds, with results reported as mean±std test accuracy. Implementations use PyTorch 1.13.1 executing on a single NVIDIA GeForce RTX 4060ti GPU.\\
\textbf{Results and discussions.} 
Tables \textcolor{red}{\ref{table1}} and \textcolor{red}{\ref{table2}} demonstrate the superiority of ChronoSelect across diverse noise settings on CIFAR benchmarks. On CIFAR-10 with instance-dependent noise (Table \textcolor{red}{\ref{table1}}), our method achieves state-of-the-art results at all noise levels, outperforming DISC by margins of up to 0.20\%. For the more challenging CIFAR-100 benchmark, ChronoSelect obtains the highest accuracy at 20\% and 40\% noise rates, while attaining competitive performance at 60\% noise, confirming that the sliding memory update balances historical stability with new information assimilation.

Further evaluations in Table \textcolor{red}{\ref{table2}} confirm our robustness to both symmetric and asymmetric noise. ChronoSelect achieves the highest accuracy in 20\%/50\% symmetric noise and 40\% asymmetric noise, with only marginal degradation at extreme 80\% symmetric noise. Crucially, our method maintains stable performance across noise distributions without requiring prior knowledge of noise types – a significant advantage over methods like DivideMix \cite{ref21} that rely on Gaussian Mixture Models for noise estimation. The consistent improvements demonstrate ChronoSelect's ability to mitigate error accumulation through temporal dynamics analysis, particularly valuable in high-noise regimes.

\begin{table}
    \centering
    \caption{Comparison with the SOTA methods on WebVision.}
    \label{table3}
    \begin{tabular}{l c c | c c}
        \toprule
            Dataset & \multicolumn{2}{c}{WebVision} & \multicolumn{2}{c}{ILSVRC12}\\
            Accuracy (\%) & Top-1 & Top-5 & Top-1 & Top-5\\
        \midrule
            $ELR$ \cite{ref36} & 76.26 & 91.26 & 68.70 & 87.80\\
            $DivideMix$ \cite{ref21} & 77.32 & 91.64 & 75.20 & 90.84\\
            $ELR+$ \cite{ref36} & 77.78 & 91.68 & 70.29 & 89.76\\
            $GJS$ \cite{ref38} & 77.99 & 90.62 & 74.33 & 90.33\\
            $CC$ \cite{ref26} & 79.36 & $\boldsymbol{93.64}$ & 76.08 & $\boldsymbol{93.86}$\\
            $CSOT$ \cite{ref48} & 79.67 & 91.95 & 76.64 & 91.67\\
            $ours$ & $\boldsymbol{80.34}$ & $\underline{93.08}$ & $\boldsymbol{77.76}$ & $\underline{93.45}$\\
        \bottomrule
    \end{tabular}
\end{table}

\subsection{Real-world Noise Benchmarks}

\textbf{Datasets.} 
We evaluate on WebVision-1.0 \cite{ref4} (2.4M images with real-world noise), following standard protocols \cite{ref46} using its 50-class subset. Performance is assessed on both WebVision's validation set and ILSVRC 2012's clean validation set \cite{ref1} to measure generalization. The noise rate of WebVision is about 30\%.\\
\textbf{Experimental Setup.} 
Following \cite{ref46}, we adopt Inception-ResNetV2 \cite{ref54} as the backbone for WebVision without ImageNet pretraining. The model is trained for 50 epochs using Adam optimizer with an initial learning rate of 0.001, decayed by half every 10 epochs.\\
\textbf{Results and discussions.}
As shown in Table \textcolor{red}{\ref{table3}}, ChronoSelect establishes state-of-the-art performance on real-world noisy datasets. Our method achieves the highest Top-1 accuracy on both WebVision (80.34\%) and ILSVRC12 (77.56\%), surpassing CSOT by 0.67\% and 1.12\% respectively. While CC exhibits optimal Top-5 accuracy, ChronoSelect attains competitive sub-optimal results (93.08\% on WebVision and 93.45\% on ILSVRC12). Notably, the performance gap between WebVision and its subset ILSVRC12 remains minimal (2.78\% in Top-1), demonstrating exceptional generalization capability without overfitting to dataset-specific noise patterns. This consistency across domains highlights our method's advantage in reducing shortcut learning effects through dual augmentations.

\begin{table}
    \centering
    \caption{Ablation study on CIFAR-10/100 under IDN 20\%, 40\% and 60\%.}
    \label{table4}
    \begin{tabular}{c c c | c c | c c}
        \toprule
            \multicolumn{3}{c}{Subsets} & \multicolumn{2}{c}{CIFAR-10} & \multicolumn{2}{c}{CIFAR-100}\\
            $\mathcal{D}_c$ & $\mathcal{D}_b$ & $\mathcal{D}_n$ & Inst. 20\% & Inst. 40\% & Inst. 20\% & Inst. 40\%\\
        \midrule
            \ding{55} & \ding{55} & \ding{55} & 83.53 & 68.03 & 63.50 & 49.06\\
            \ding{51} & \ding{55} & \ding{55} & 94.11 & 92.45 & 74.58 & 71.17\\
            \ding{51} & \ding{55} & \ding{51} & 94.73 & 93.29 & 76.82 & 73.89\\
            \ding{51} & \ding{51} & \ding{51} & $\boldsymbol{96.78}$ & $\boldsymbol{96.09}$ & $\boldsymbol{80.34}$ & $\boldsymbol{78.32}$\\
        \bottomrule
    \end{tabular}
\end{table}

\subsection{Ablation Study}
Table \textcolor{red}{\ref{table4}} quantifies each sample subset's contribution. The first row shows baseline performance without splitting, using cross-entropy on all samples as clean. Using only clean samples ($\mathcal{D}_c$, \ding{51}\ding{55}\ding{55}) gives 94.11\%, validating temporal identification. Adding noisy samples ($\mathcal{D}_n$, \ding{51}\ding{55}\ding{51}) improves to 94.73\%, demonstrating semi-supervised benefits. Full inclusion of boundary samples ($\mathcal{D}_b$, \ding{51}\ding{51}\ding{51}) peaks at 96.78\%, a 2.05\%gain over $\mathcal{D}_c+\mathcal{D}_n$, proving synergy: three-level partitioning with differentiated supervision enables dynamic discrimination, reduces misclassification, and balances fidelity-robustness, maintaining 96.09\%at 40\%IDN while alternatives degrade.

\section{Conclusion}
In this paper, a temporal memory-based sample selection framework is proposed to solve the problem of sample misjudgment in label noise scenarios. The historical prediction information is recorded through the two-branch network, and the joint discriminant criterion of time series loss trend and prediction consistency is constructed. The samples are divided into three categories: clean, boundary and noise, and a differentiated supervision strategy is designed. Theoretical analysis proves the convergence and noise robustness of the memory storage mechanism. Experiments show that the method can still maintain a high purity rate under a high proportion of noise, and the generalization ability of the model is improved by the effective use of boundary samples. Compared with the traditional methods based on small loss or fixed threshold, the three-level partition criterion reduces the subjective assumption dependence and provides a more adaptive solution for complex noise scenes.

\begin{acks}
    This work was supported in part by Shenzhen Science and Technology Program (Nos. JCYJ20230807120010021 and JCYJ20230807115959041), National Natural Science Foundation of China (No. 62476052), Sichuan Science and Technology Program (Nos. 2024NSFSC1473 and 2024ZYD0268), and Sichuan Provincial Natural Science Foundation (No. 2024NSFSC0520).
\end{acks}

\bibliographystyle{ACM-Reference-Format}
\bibliography{reference}

@article{ref1,
  title={ImageNet classification with deep convolutional neural networks},
  author={Krizhevsky, Alex and Sutskever, Ilya and Hinton, Geoffrey E},
  journal={Communications of the ACM},
  volume={60},
  number={6},
  pages={84--90},
  year={2017},
  publisher={AcM New York, NY, USA}
}

@inproceedings{ref2,
  title={Deep residual learning for image recognition},
  author={He, Kaiming and Zhang, Xiangyu and Ren, Shaoqing and Sun, Jian},
  booktitle={Proceedings of the IEEE conference on computer vision and pattern recognition},
  pages={770--778},
  year={2016}
}

@inproceedings{ref3,
  title={Learning from massive noisy labeled data for image classification},
  author={Xiao, Tong and Xia, Tian and Yang, Yi and Huang, Chang and Wang, Xiaogang},
  booktitle={Proceedings of the IEEE conference on computer vision and pattern recognition},
  pages={2691--2699},
  year={2015}
}

@article{ref4,
  title={Webvision database: Visual learning and understanding from web data},
  author={Li, Wen and Wang, Limin and Li, Wei and Agustsson, Eirikur and Van Gool, Luc},
  journal={arXiv preprint arXiv:1708.02862},
  year={2017}
}

@inproceedings{ref5,
    title={Pervasive Label Errors in Test Sets Destabilize Machine Learning Benchmarks},
    author={Curtis G Northcutt and Anish Athalye and Jonas Mueller},
    booktitle={Thirty-fifth Conference on Neural Information Processing Systems Datasets and Benchmarks Track (Round 1)},
    year={2021}
}

@inproceedings{ref6,
    title={Understanding deep learning requires rethinking generalization},
    author={Chiyuan Zhang and Samy Bengio and Moritz Hardt and Benjamin Recht and Oriol Vinyals},
    booktitle={International Conference on Learning Representations},
    year={2017}
}

@inproceedings{ref7,
  title={A closer look at memorization in deep networks},
  author={Arpit, Devansh and Jastrz{\k{e}}bski, Stanis{\l}aw and Ballas, Nicolas and Krueger, David and Bengio, Emmanuel and Kanwal, Maxinder S and Maharaj, Tegan and Fischer, Asja and Courville, Aaron and Bengio, Yoshua and others},
  booktitle={International conference on machine learning},
  pages={233--242},
  year={2017},
  organization={PMLR}
}

@article{ref8,
  title={Classification in the presence of label noise: a survey},
  author={Fr{\'e}nay, Beno{\^\i}t and Verleysen, Michel},
  journal={IEEE transactions on neural networks and learning systems},
  volume={25},
  number={5},
  pages={845--869},
  year={2013},
  publisher={IEEE}
}

@misc{ref9,
    title={Deep Learning is Robust to Massive Label Noise},
    author={David Rolnick and Andreas Veit and Serge Belongie and Nir Shavit},
    booktitle={International Conference on Learning Representations},
    year={2018}
}

@inproceedings{ref10,
  title={Mentornet: Learning data-driven curriculum for very deep neural networks on corrupted labels},
  author={Jiang, Lu and Zhou, Zhengyuan and Leung, Thomas and Li, Li-Jia and Fei-Fei, Li},
  booktitle={International conference on machine learning},
  pages={2304--2313},
  year={2018},
  organization={PMLR}
}

@article{ref11,
  title={Early-learning regularization prevents memorization of noisy labels},
  author={Liu, Sheng and Niles-Weed, Jonathan and Razavian, Narges and Fernandez-Granda, Carlos},
  journal={Advances in neural information processing systems},
  volume={33},
  pages={20331--20342},
  year={2020}
}

@article{ref17,
  title={Co-teaching: Robust training of deep neural networks with extremely noisy labels},
  author={Han, Bo and Yao, Quanming and Yu, Xingrui and Niu, Gang and Xu, Miao and Hu, Weihua and Tsang, Ivor and Sugiyama, Masashi},
  journal={Advances in neural information processing systems},
  volume={31},
  year={2018}
}

@inproceedings{ref18,
  title={How does disagreement help generalization against label corruption?},
  author={Yu, Xingrui and Han, Bo and Yao, Jiangchao and Niu, Gang and Tsang, Ivor and Sugiyama, Masashi},
  booktitle={International conference on machine learning},
  pages={7164--7173},
  year={2019},
  organization={PMLR}
}

@inproceedings{ref21,
    title={DivideMix: Learning with Noisy Labels as Semi-supervised Learning},
    author={Junnan Li and Richard Socher and Steven C.H. Hoi},
    booktitle={International Conference on Learning Representations},
    year={2020}
}

@inproceedings{ref23,
  title={Unicon: Combating label noise through uniform selection and contrastive learning},
  author={Karim, Nazmul and Rizve, Mamshad Nayeem and Rahnavard, Nazanin and Mian, Ajmal and Shah, Mubarak},
  booktitle={Proceedings of the IEEE/CVF conference on computer vision and pattern recognition},
  pages={9676--9686},
  year={2022}
}

@inproceedings{ref26,
  title={Centrality and consistency: two-stage clean samples identification for learning with instance-dependent noisy labels},
  author={Zhao, Ganlong and Li, Guanbin and Qin, Yipeng and Liu, Feng and Yu, Yizhou},
  booktitle={European Conference on Computer Vision},
  pages={21--37},
  year={2022},
  organization={Springer}
}

@article{ref27,
  title={Decoupling" when to update" from" how to update"},
  author={Malach, Eran and Shalev-Shwartz, Shai},
  journal={Advances in neural information processing systems},
  volume={30},
  year={2017}
}

@inproceedings{ref29,
  title={Combating noisy labels with sample selection by mining high-discrepancy examples},
  author={Xia, Xiaobo and Han, Bo and Zhan, Yibing and Yu, Jun and Gong, Mingming and Gong, Chen and Liu, Tongliang},
  booktitle={Proceedings of the IEEE/CVF international conference on computer vision},
  pages={1833--1843},
  year={2023}
}

@inproceedings{ref31,
  title={Disc: Learning from noisy labels via dynamic instance-specific selection and correction},
  author={Li, Yifan and Han, Hu and Shan, Shiguang and Chen, Xilin},
  booktitle={Proceedings of the IEEE/CVF conference on computer vision and pattern recognition},
  pages={24070--24079},
  year={2023}
}

@inproceedings{ref32,
  title={Combating noisy labels by agreement: A joint training method with co-regularization},
  author={Wei, Hongxin and Feng, Lei and Chen, Xiangyu and An, Bo},
  booktitle={Proceedings of the IEEE/CVF conference on computer vision and pattern recognition},
  pages={13726--13735},
  year={2020}
}

@inproceedings{ref33,
    title={Learning with Instance-Dependent Label Noise: A Sample Sieve Approach},
    author={Hao Cheng and Zhaowei Zhu and Xingyu Li and Yifei Gong and Xing Sun and Yang Liu},
    booktitle={International Conference on Learning Representations},
    year={2021}
}

@inproceedings{ref34,
  title={A second-order approach to learning with instance-dependent label noise},
  author={Zhu, Zhaowei and Liu, Tongliang and Liu, Yang},
  booktitle={Proceedings of the IEEE/CVF conference on computer vision and pattern recognition},
  pages={10113--10123},
  year={2021}
}

@article{ref36,
  title={Early-learning regularization prevents memorization of noisy labels},
  author={Liu, Sheng and Niles-Weed, Jonathan and Razavian, Narges and Fernandez-Granda, Carlos},
  journal={Advances in neural information processing systems},
  volume={33},
  pages={20331--20342},
  year={2020}
}

@inproceedings{ref37,
  title={Co-learning: Learning from noisy labels with self-supervision},
  author={Tan, Cheng and Xia, Jun and Wu, Lirong and Li, Stan Z},
  booktitle={Proceedings of the 29th ACM international conference on multimedia},
  pages={1405--1413},
  year={2021}
}

@article{ref38,
  title={Generalized jensen-shannon divergence loss for learning with noisy labels},
  author={Englesson, Erik and Azizpour, Hossein},
  journal={Advances in Neural Information Processing Systems},
  volume={34},
  pages={30284--30297},
  year={2021}
}

@article{ref39,
  title={Learning multiple layers of features from tiny images},
  author={Krizhevsky, Alex and Hinton, Geoffrey and others},
  year={2009},
  publisher={Toronto, ON, Canada}
}

@inproceedings{ref40,
  title={A second-order approach to learning with instance-dependent label noise},
  author={Zhu, Zhaowei and Liu, Tongliang and Liu, Yang},
  booktitle={Proceedings of the IEEE/CVF conference on computer vision and pattern recognition},
  pages={10113--10123},
  year={2021}
}

@article{ref41,
  title={Part-dependent label noise: Towards instance-dependent label noise},
  author={Xia, Xiaobo and Liu, Tongliang and Han, Bo and Wang, Nannan and Gong, Mingming and Liu, Haifeng and Niu, Gang and Tao, Dacheng and Sugiyama, Masashi},
  journal={Advances in Neural Information Processing Systems},
  volume={33},
  pages={7597--7610},
  year={2020}
}

@inproceedings{ref42,
  title={Self-filtering: A noise-aware sample selection for label noise with confidence penalization},
  author={Wei, Qi and Sun, Haoliang and Lu, Xiankai and Yin, Yilong},
  booktitle={European Conference on Computer Vision},
  pages={516--532},
  year={2022},
  organization={Springer}
}

@inproceedings{ref43,
    title={Sample Selection with Uncertainty of Losses for Learning with Noisy Labels},
    author={Xiaobo Xia and Tongliang Liu and Bo Han and Mingming Gong and Jun Yu and Gang Niu and Masashi Sugiyama},
    booktitle={International Conference on Learning Representations},
    year={2022}
}

@inproceedings{ref44,
  title={Robust curriculum learning: from clean label detection to noisy label self-correction},
  author={Zhou, Tianyi and Wang, Shengjie and Bilmes, Jeff},
  booktitle={International Conference on Learning Representations},
  year={2021}
}

@inproceedings{ref45,
  title={Late stopping: Avoiding confidently learning from mislabeled examples},
  author={Yuan, Suqin and Feng, Lei and Liu, Tongliang},
  booktitle={Proceedings of the IEEE/CVF international conference on computer vision},
  pages={16079--16088},
  year={2023}
}

@inproceedings{ref46,
  title={Understanding and utilizing deep neural networks trained with noisy labels},
  author={Chen, Pengfei and Liao, Ben Ben and Chen, Guangyong and Zhang, Shengyu},
  booktitle={International conference on machine learning},
  pages={1062--1070},
  year={2019},
  organization={PMLR}
}

@inproceedings{ref47,
  title={Adaptive contrastive learning for learning robust representations under label noise},
  author={Wang, Zihao and Zhang, Weichen and Bao, Weihong and Long, Fei and Yuan, Chun},
  booktitle={Proceedings of the 31st ACM International Conference on Multimedia},
  pages={4917--4927},
  year={2023}
}

@article{ref48,
  title={Csot: Curriculum and structure-aware optimal transport for learning with noisy labels},
  author={Chang, Wanxing and Shi, Ye and Wang, Jingya},
  journal={Advances in Neural Information Processing Systems},
  volume={36},
  pages={8528--8541},
  year={2023}
}

@article{ref49,
  title={Bayesian estimation of beta mixture models with variational inference},
  author={Ma, Zhanyu and Leijon, Arne},
  journal={IEEE Transactions on Pattern Analysis and Machine Intelligence},
  volume={33},
  number={11},
  pages={2160--2173},
  year={2011},
  publisher={IEEE}
}

@article{ref50,
  title={A study of Gaussian mixture models of color and texture features for image classification and segmentation},
  author={Permuter, Haim and Francos, Joseph and Jermyn, Ian},
  journal={Pattern recognition},
  volume={39},
  number={4},
  pages={695--706},
  year={2006},
  publisher={Elsevier}
}

@article{ref51,
  title={Debiased sample selection for combating noisy labels},
  author={Wei, Qi and Feng, Lei and Wang, Haobo and An, Bo},
  journal={arXiv preprint arXiv:2401.13360},
  year={2024}
}

@inproceedings{ref52,
  title={OT-Filter: An optimal transport filter for learning with noisy labels},
  author={Feng, Chuanwen and Ren, Yilong and Xie, Xike},
  booktitle={Proceedings of the IEEE/CVF Conference on Computer Vision and Pattern Recognition},
  pages={16164--16174},
  year={2023}
}

@inproceedings{ref53,
    author = {Hu, Chuanyang and Yan, Shipeng and Gao, Zhitong and He, Xuming},
    title = {MILD: modeling the instance learning dynamics for learning with noisy labels},
    year = {2023},
    booktitle = {Proceedings of the Thirty-Second International Joint Conference on Artificial Intelligence},
}

@inproceedings{ref54,
  title={Inception-v4, inception-resnet and the impact of residual connections on learning},
  author={Szegedy, Christian and Ioffe, Sergey and Vanhoucke, Vincent and Alemi, Alexander},
  booktitle={Proceedings of the AAAI conference on artificial intelligence},
  volume={31},
  number={1},
  year={2017}
}

@inproceedings{ref55,
  title={Identity mappings in deep residual networks},
  author={He, Kaiming and Zhang, Xiangyu and Ren, Shaoqing and Sun, Jian},
  booktitle={Computer Vision--ECCV 2016: 14th European Conference, Amsterdam, The Netherlands, October 11--14, 2016, Proceedings, Part IV 14},
  pages={630--645},
  year={2016},
  organization={Springer}
}

@inproceedings{ref56,
  title={Efficient Medical Image Restoration via Reliability Guided Learning in Frequency Domain},
  author={Zheng, Pengcheng and Chen, Kecheng and Huang, Jiaxin and Chen, Bohao and Liu, Ju and Ren, Yazhou and Pu, Xiaorong},
  booktitle={Proceedings of the 33st ACM International Conference on Multimedia},
  year={2025}
}

\end{document}